\documentclass[11pt]{article}
\usepackage{amsmath}
\usepackage{algorithm}
\usepackage{algpseudocode}

\usepackage[english]{babel}

\usepackage[letterpaper,top=2cm,bottom=2cm,left=3cm,right=3cm,marginparwidth=1.75cm]{geometry}

\usepackage[utf8]{inputenc}
\usepackage{amsmath}
\usepackage{amssymb}
\usepackage{amsthm}
\usepackage{multicol}
\usepackage{multirow}
\usepackage{amssymb}
\usepackage{algpseudocode} 
\usepackage{algorithm}
\usepackage[mathcal]{euscript}
\usepackage[english]{babel}
\usepackage{adjustbox}
\usepackage[T1]{fontenc}
\usepackage{url}            
\usepackage{booktabs}       
\usepackage{amsfonts}       
\usepackage{nicefrac}       
\usepackage{microtype}      
\usepackage{tikz}
\usetikzlibrary{positioning}
\usepackage{graphicx}
\usepackage[nottoc,notlot,notlof]{tocbibind}
\usepackage{caption}
\usepackage{subcaption}
\usepackage{mathabx}
\usepackage[inline]{enumitem}
\usepackage{mathtools}

\newtheorem{proposition}{Proposition}

\newtheorem{remark}{Remark}

\newtheorem{definition}{Definition}

\newcommand{\fbb}{f_{\texttt{BB}}}

\newcommand{\calP}{\mathcal{P}}

\newcommand{\calX}{\mathcal{X}}
\newcommand{\calA}{\mathcal{A}}

\title{\textsc{I-SAFE}: Wasserstein Coherence Metrics for Structural Auditing of Scientific AI Models}

\author{
    Barbara Tarantino\thanks{Department of Economics, University of Pavia, Via S. Felice Al Monastero, 5, Pavia, 27100, Italy, email: barbara.tarantino@unipv.it}
    \and
    Gennaro Auricchio\thanks{Department of Mathematics, University of Padua, Via Trieste, 63, Padua, 35131, Italy, email: gennaro.auricchio@unipd.it}
    \and
    Paolo Giudici\thanks{Department of Economics, University of Pavia, Via S. Felice Al Monastero, 5, Pavia, 27100, Italy, email: paolo.giudici@unipv.it}
}

\begin{document}

\maketitle

\begin{abstract}
Deep learning models are increasingly used in scientific prediction tasks where strong benchmark performance is often interpreted as evidence of scientifically meaningful behavior. This interpretation is fragile, as models may exploit shortcut features, dataset-specific regularities, or distributional biases that are predictive on held-out data but not aligned with domain-relevant structure.
To address this limitation, we introduce the \textsc{I-SAFE} (Interventional Secure, Accurate, Fair and Explainable) framework, a post-hoc distributional auditing framework for scientific AI models centered on the Wasserstein Coherence Metric (WCM). Given a trained black-box predictor and an external structural prior encoding domain knowledge about task-relevant input structure, \textsc{I-SAFE} evaluates raw model outputs under structurally guided perturbations of the input.
The proposed audit measures output-distribution coherence through three complementary metrics: a Quantile-Based Metric (QBM) for location-level coherence, the WCM for ordinal coherence, and a translation-invariant
WCM variant for shape coherence. 
We instantiate \textsc{I-SAFE} on drug--target interaction (DTI) prediction using the Davis
kinase benchmark, KLIFS (Kinase--Ligand Interaction Fingerprints and Structures)
binding-pocket annotations, and three sequence-based DTI models: DeepConvDTI,
DeepDTA, and TAPB.
Although the models operate in a comparable predictive regime, \textsc{I-SAFE} reveals substantially different distributional response profiles, a distinction invisible to accuracy-based evaluation. The framework is model-agnostic and applicable to any domain where inputs admit a structured decomposition and an external prior is available.
\end{abstract}

\section{Introduction}
\label{sec:intro}

    %

%
Held-out predictive performance remains the dominant criterion for evaluating deep learning models in scientific machine learning, including molecular property prediction, drug--target interaction~(DTI), and gene perturbation response~\cite{wu2018moleculenet,huang2021therapeutics,lotfollahi2023predicting}. 
In these settings, strong benchmark
performance is often interpreted as evidence of scientifically meaningful behavior.
%
This interpretation is fragile, as predictive accuracy measures are affected by statistical regularities that are present in the benchmark distribution, and not only on whether the model rightly identified the relevant structures of the input~\cite{pearl2009causality}.
%
%
This gap is central to modern machine learning, where high-capacity models can achieve strong performance by exploiting shortcuts that do not necessarily align with the scientific structure of interest~\cite{geirhos2020shortcut,ilyas2019adversarial}.
In scientific applications, this limitation is empirical as well as conceptual. 
For example, high-performing
protein--ligand affinity models have been shown to rely substantially on ligand memorisation rather than interaction-specific information~\cite{mastropietro2023learning}.
%
%
Specifically, in DTI, apparent progress on standard benchmarks can be influenced by target bias, scaffold effects, leakage, and other forms of distributional bias~\cite{lin2025tapb,wallach2018ligand,graber2025resolving}, which leads the model to fail on structurally similar compounds with markedly different functional behaviour~\cite{vantilborg2022activity}.
%

%
These findings expose a mismatch between the predictive success of a model and its ability to capture the structural properties of the problem.
Standard benchmark evaluation can establish that a model predicts well on a given distribution, but cannot characterize how model outputs reorganize when scientifically relevant input structure is perturbed.
%

%
%
%
A principled response to this problem is to move from observational to interventional evaluation, where the question is not only whether a model predicts correctly, but how its predictions change under controlled perturbations of the input.
In causal terms, mechanistic claims concern responses to interventions rather than associations observed under a fixed distribution~\cite{pearl2009causality,peters2016causal}. 
This perspective has informed several post-hoc lines of work, including causal abstraction of internal representations~\cite{geiger2021causal,geiger2024finding}, model-randomisation tests for explanation faithfulness~\cite{adebayo2018sanity}, and perturbation probes of reasoning behaviour in language models~\cite{xu2025reimagine,zhang2025llmscan}. 
In scientific prediction, domain knowledge provides a structural prior over inputs, enabling comparison of model responses to perturbations on prior-selected versus non-prior selected components. 
%
%
%
%
This yields a post-hoc, prior-relative audit of whether the model response
is organized with respect to meaningful input structure, as formalized in~\cite{tarantino2026isaacauditingcausalreasoning}.
%

%

\paragraph{Our Contribution.}

In this paper, we introduce \textsc{I-SAFE} (Interventional Secure, Accurate, Fair and Explainable), a post-hoc distributional auditing framework for trained black-box scientific predictors. 
While existing intervention-based audits rely on scalar summaries, the \textsc{I-SAFE}
framework evaluates the full output distribution induced by the model, capturing how
output ranks reorder under prior-guided perturbations of selected input components.
%
Our main theoretical contribution thus consists in defining a set of metrics that capture different levels of ranking coherence: the Quantile-Based Metric for location-level coherence, the Wasserstein Coherence Metric for ordinal coherence, and a translation-invariant Wasserstein Metric for distributional shape coherence. 
For each metric, the prior-relative contrast compares the ranking-coherence under outside-prior controls and prior-selected perturbations. 
Positive contrasts indicate that perturbations of prior-selected components induce more coherent output responses than their controls outside the prior.
%
%

%
We then instantiate \textsc{I-SAFE} on the Davis kinase benchmark~\cite{davis2011comprehensive}, auditing three sequence-based DTI models, DeepDTA~\cite{ozturk2018deepdta}, DeepConvDTI~\cite{lee2019deepconv}, and TAPB~\cite{lin2025tapb}, using Kinase--Ligand Interaction Fingerprints and Structures (KLIFS) binding-pocket annotations as an external structural prior~\cite{kooistra2016klifs,kanev2021klifs}.
While the audited models operate in a comparable predictive regime, 
%
%
we show that their interventional behaviour differs substantially. 
In particular, TAPB is the only model for which KLIFS-aligned pocket perturbations induce significantly more coherent quantile-level and ordinal responses than non-pocket controls.
%
Our results show that predictive performance and distributional coherence capture distinct aspects of scientific model behaviour, indicating that \textsc{I-SAFE} provides a statistically grounded, prior-relative test of how trained scientific predictors operate under structurally meaningful perturbations.
We stress that the audit does not establish causal validity of the model, nor does it require access to the model itself, it rather provides reusable, model-agnostic metrics for evaluating black-box model responses under structurally guided perturbations, applicable to any setting where inputs admit a structured decomposition and a domain prior is available.
%

\section{Related Work}
\label{sec:related}


\paragraph{From explanation to interventional model analysis.}
Attribution and explanation methods provide tools for inspecting trained models. 
Saliency maps, local surrogate models, Shapley-based explanations, and integrated gradients~\cite{ribeiro2016trust,lundberg2017unified,sundararajan2017axiomatic,simonyan2014deep} identify input features that influence predictions, supporting transparency and debugging~\cite{doshi2017rigorous}.
However, such explanations are mostly associational: they indicate the influential features under the observed input distribution, without establishing whether predictions depend on scientifically relevant structure.
Their fragility has been documented empirically: Adebayo et al.~\cite{adebayo2018sanity} show that saliency methods can be invariant to model parameter randomisation, calling into question their mechanistic faithfulness.
Intervention-based approaches address this limitation by asking how the model changes under controlled modifications. 
Causal abstraction and related
methods~\cite{geiger2021causal,geiger2024finding} compare neural representations with interpretable causal variables through interchange interventions on internal activations. 
These methods provide a formal route to mechanistic comparison, but usually require access to internal representations and a target causal structure. 
Input-level perturbation methods avoid this requirement by probing the input--output matching induced by the model and have been applied to language models~\cite{xu2025reimagine,zhang2025llmscan}. 
%
%
In scientific prediction, post-hoc structural auditing under an external prior has been proposed to contrast mechanistic and spurious perturbations through scalar summaries~\cite{tarantino2026isaacauditingcausalreasoning}.
\textsc{I-SAFE} retains the post-hoc black-box setting, but shifts
the object of comparison from scalar aggregated values to the distributional changes in the model output.

\paragraph{Shortcut learning and structural robustness.}
The need for auditing frameworks is reinforced by extensive evidence that predictive success can be driven by shortcuts or unstable correlations rather than task-relevant structures~\cite{geirhos2020shortcut,ilyas2019adversarial}.
In molecular prediction and DTI, this issue appears as ligand memorisation~\cite{mastropietro2023learning}, target prior bias~\cite{lin2025tapb}, benchmark artifacts related to leakage and split design~\cite{wallach2018ligand,graber2025resolving}, and failures on activity cliffs, where structurally similar compounds have different functional effects~\cite{vantilborg2022activity}. 
These findings motivate 
the search for predictors relying on stable relations. 
Invariant causal prediction~\cite{peters2016causal} and Invariant Risk Minimization~\cite{arjovsky2019invariant}, formalize stability across environments as a route toward more robust prediction.
The objective of \textsc{I-SAFE} is complementary. Rather than modifying the learning procedure or requiring environment annotations, it asks whether a trained model responds coherently when input components identified by external scientific knowledge are perturbed. 
This distinction is important in scientific AI, where models are used as black-box predictors and retraining is impractical, unavailable, or insufficient to diagnose the predictor.
A parallel limitation affects intervention-based evaluation itself, where the richness of the auditing framework is bounded by the statistical resolution at which model responses are characterized.

\paragraph{Beyond aggregate performance.}
Single aggregate metrics are often too coarse to characterize behaviour relevant for downstream use. 
In language model evaluation, DecodingTrust~\cite{wang2023decodingtrust}, HELM~\cite{liang2023helm}, and
BIG-bench~\cite{srivastava2023beyond} address this issue by organizing assessment across multiple capabilities, risks, and scenarios. 
Likewise, in scientific model evaluation, held-out performance establishes predictive adequacy on a benchmark distribution, but not how predictions reorganize under perturbations of scientifically meaningful input structure.
A similar limitation arises within intervention-based evaluation itself. 
Model responses are often reduced to average effects, scalar sensitivity scores, or a small set of moments~\cite{geiger2021causal,zhang2025llmscan,tarantino2026isaacauditingcausalreasoning}. 
These summaries detect location-level contrasts, but they might miss whether output distributions shift coherently, preserve ordinal structure, or change shape.
A standard approach to overcome single aggregate metrics relies on using two-sample distributional comparison~\cite{cramer1928composition,anderson1962distribution} and optimal transport theory~\cite{villani2009optimal,peyre2019computational}, as these metrics capture the geometry of the underlying space. 
%
\textsc{I-SAFE} brings this perspective to structural auditing by decomposing the distributional response to perturbations into three complementary axes: location, ordinal structure, and shape. Each axis captures aspects of output reorganization invisible to scalar summaries.

\section{The I-SAFE Framework}
\label{sec:framework}

%
%
%
%


%
In this section, we formalize the \textsc{I-SAFE} framework as a post-hoc auditing procedure for fixed black-box predictors. 
\textsc{I-SAFE} leverages prior-guided perturbations to induce paired profiles of raw model outputs and evaluates the coherence of their distributional reorganization after the intervention. 
%
%
We consider a predictor \(\fbb:\calX\to\mathbb{R}\) accessed only through its input--output map, where \(\calX=\prod_{m=1}^{M}\calX_m\) admits a decomposition in \(M\) identifiable components. 
For an input \(x=(x^{(1)},\ldots,x^{(M)})\), the components of $x$ define the units on which interventions act. 
Throughout the audit, \(\fbb(x)\) denotes the raw model output on input $x\in\calX$, before thresholding, calibration, or downstream decision rules.

\subsection{Problem formulation}
\label{sec:formulation}

%
%
 
%
%
%

%
%
%

%
Given a black-box predictor $\fbb$, the audit is performed on $\calA = \{x_i\}_{i=1}^{N} \subset\calX$ which is disjoint from the data used to train $\fbb$. 
The set $\calA$ contains the elements of $\calX$ on which all perturbations are applied and all output distributions are compared.
%
%
%
To determine how to perform a perturbation, we have access to a structural prior over $\calX$, derived from domain knowledge that does not depend on $\fbb$.

\begin{definition}[Structural prior]
\label{def:prior}
We say that a map $\calP : \calX \to 2^{[M]}$ is a \emph{structural prior} if it satisfies $\emptyset \subsetneq \calP(x) \subsetneq [M]$ for all $x \in \calX$.
\end{definition}

For each input $x$, the set $\calP(x) \subset [M]$ indexes the components that domain knowledge regards as relevant for the prediction task.
The strict inclusions ensure we avoid cases in which no components are relevant (i.e. $\calP(x) = \emptyset$) and the case in which all components are relevant (i.e. $\calP(x)\! =\! [M]$).

\begin{remark}[Epistemic status of \(\calP\)]
\(\calP\) encodes structural information derived from independent domain evidence, not ground-truth causal structure~\cite{pearl2009causality}. Accordingly, the \textsc{I-SAFE} framework quantifies prior-relative structural alignment of model responses rather than causal validity in an absolute sense. 
The strength of the audit therefore depends on the scientific quality of the information encoded in \(\calP\), which is consistent with the principles of veridical data science~\cite{yu2020veridical}.
\end{remark}
%


\subsection{Prior-relative intervention design}
\label{sec:primitive}

%
Given $\calA$ and a structural prior $\calP$, the audit compares two classes of interventions defined relative to $\calP$: the interventions acting on $\calP(x)$, and the interventions acting on the complement of $\calP(x)$.
The comparison is thus prior-dependent and evaluates whether the model responds differently when the same perturbation is applied to prior-selected components rather than to components outside the prior.

%

\begin{definition}[Mechanistic and spurious perturbations]
\label{def:structured_spurious}
A perturbation \(\varphi_{\calP}:\calX\to\calX\) is \emph{mechanistic} if it
acts only on components selected by the prior, that is
\begin{equation}
    \bigl(\varphi_{\calP}(x)\bigr)_i = x_i
    \qquad \forall\, i \notin \calP(x).
\end{equation}
Likewise, $\varphi$ is said to be \emph{spurious} if it leaves all prior-selected components unchanged, that is
\begin{equation}
    \bigl(\varphi_{\calP}(x)\bigr)_i = x_i
    \qquad \forall\, i \in \calP(x).
\end{equation}
%

%

\end{definition}

The term \emph{spurious} is used only relative to the specified prior \(\calP(x)\).
Notice that this does not imply that such components are irrelevant in an absolute scientific sense.
To make the mechanistic-spurious comparison interpretable, we distinguish the perturbation rule from the intervention support.

\begin{definition}[Perturbation operator]
\label{def:operator}
A \emph{perturbation operator} is a map $\Phi:\calX\times 2^{[M]}\to\calX$ such that, for every \(x\in\calX\) and \(S\subseteq[M]\),
\(\Phi(x,S)\) differs from \(x\) only on components indexed by \(S\).
\end{definition}

Given $x\in\calX$, the mechanistic support of $x$ is \(\calP(x)\).
A spurious support is a set \(S^{\mathrm{spur}}(x)\subseteq [M]\setminus\calP(x)\). 
Both supports are modified using the same operator \(\Phi\).
Thus, the two interventions share the perturbation rule and differ only in the position of their support relative to the prior.

\begin{definition}[Matched perturbation pair]
\label{def:matched_pair}
A pair \((\varphi^{\mathrm{mech}},\varphi^{\mathrm{spur}})\) is a
\emph{matched perturbation pair} on \(\calA\) if there exists a perturbation
operator \(\Phi\) such that, for every \(x\in\calA\),
\begin{equation}
    \varphi^{\mathrm{mech}}(x)=\Phi(x,\calP(x)),
    \qquad
    \varphi^{\mathrm{spur}}(x)=\Phi(x,S^{\mathrm{spur}}(x)),
\end{equation}
where $S^{\mathrm{spur}}(x)\subseteq [M]\setminus\calP(x)$ and $|S^{\mathrm{spur}}(x)|=|\calP(x)|$.
\end{definition}

A matched perturbation pair controls the two intervention degrees of the method: the perturbation rule and the number of perturbed components. 
Acting on the same number of entries enables the isolation of the prior alignment role in the model response, according to the prior $\calP$.
%

\begin{definition}[Interventional response profile]
\label{def:response_profile}
Let \(\varphi:\calX\to\calX\) be either a mechanistic or a spurious
perturbation, and let \(\calA=\{x_i\}_{i=1}^{N}\subset\calX\) be the auditing
set. 
Then, we define 
\begin{equation}
    \label{eq:output_vectors}
    V_{\calA} = \bigl(\fbb(x_i)\bigr)_{i=1}^{N}
    \qquad\qquad\qquad \text{and} \qquad\qquad\qquad
    V_{\varphi(\calA)} = \bigl(\fbb(\varphi(x_i))\bigr)_{i=1}^{N}.
\end{equation}
The pair $\mathcal{R}_{\fbb}(\varphi;\calA) = (V_{\calA},V_{\varphi(\calA)})$
is the \emph{interventional response profile} of \(\fbb\) under \(\varphi\) on
\(\calA\).
\end{definition}

The response profile contains the raw model outputs before and after intervention on the same auditing population, preserving both the empirical output distributions and the input-wise pairing induced by the perturbation.
For each matched pair, \textsc{I-SAFE} compares the mechanistic and spurious response profiles through the coherence metrics introduced in the following section.

\subsection{The I-SAFE Auditing Metrics}
\label{sec:metrics}

%
%
%
%

%
%
In this section, we introduce a set of three metrics to quantify how the raw output distribution of $\fbb$ changes in terms of relative ranking coherence under a perturbation $\varphi$. 
In line with Secure, Accurate, Fair and Explainable (SAFE) metrics~\cite{auricchio2026rank,babaei2025rank}, we define our metrics to represent the percentage of coherence explainable by input differences.
In other words, the metrics should attain values close to $1$ when the coherence is small and values close to $0$ when the coherence is high.
%
%
%
%

%
First, we assess the location-level distributional change by comparing empirical quantiles of the original and perturbed output profiles, \textit{i.e.},
\(\{\fbb(x)\}_{x\in\calA}\) and
\(\{\fbb(\varphi(x))\}_{x\in\calA}\).
%
%
%
%
Given a vector of target quantile levels $q=(q_1,\ldots,q_K)$, we denote by \(q^{(\calA)}_k\) the empirical \(q_k\)-quantile of \(\{\fbb(x_i)\}_{i=1}^{N}\), and by \(q^{(\varphi(\calA))}_k\) ithe empirical \(q_k\)-quantile of \(\{\fbb(\varphi(x_i))\}_{i=1}^{N}\).
Finally, we set $q^{(\calA)}=(q^{(\calA)}_1,\ldots,q^{(\calA)}_K)$  and by $q^{(\varphi(\calA))}=(q^{(\varphi(\calA))}_1,\ldots,q^{(\varphi(\calA))}_K)$. 
%
%
%
%
The \textit{Quantile-based Metric} (QBM) compares the displacement of these representative locations with the total Mean Square Error of the values.


\begin{definition}
Given an auditing set $\calA$, a perturbation $\varphi$, a black-box model $\fbb$, and a quantile vector \(q=(q_1,\ldots,q_K)\), the Quantile-Based Metric (QBM) is defined as
    \begin{equation}
        QBM(\varphi;\fbb)=\Bigg(1-\sqrt{\frac{\frac{1}{K}\sum_{i=1}^K |q^{\varphi(\calA)}_i-q^{(\calA)}_i|^2}{\frac{1}{|\calA|}\sum_{i=1}^{|\calA|}|\fbb(\varphi(x_i))-\fbb(x_i)|^2}}\Bigg)_+,
    \end{equation}
\end{definition}
where $(x)_+$ denotes the positive part of $x$.
%

%
%
%


%
%
%
By definition, the lower the values of QBM, the larger the shift in the quantile displacement.
Conversely, larger values of QBM indicate that the quantile structure remains mostly unaffected by the perturbation despite potentially large pointwise changes.

\begin{proposition}
\label{prop0}
    For every auditing set $\calA$, perturbation $\varphi$, black-box model $\fbb$, and quantile vector $q$ it holds $QBM(\varphi;\fbb)\in[0,1]$.
\end{proposition}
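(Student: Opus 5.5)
The upper bound is immediate, so the real content is making sure the expression is well defined and that the positive part handles the lower bound. Write
\[
R \;=\; \sqrt{\frac{\frac{1}{K}\sum_{k=1}^K |q^{(\varphi(\calA))}_k-q^{(\calA)}_k|^2}{\frac{1}{|\calA|}\sum_{i=1}^{|\calA|}|\fbb(\varphi(x_i))-\fbb(x_i)|^2}}.
\]
Whenever the denominator is positive, $R$ is the square root of a ratio of a nonnegative numerator to a positive denominator, so $R\ge 0$. Hence $1-R\le 1$, and since $(t)_+\le \max\{t,0\}$ with $t\le 1$, we get $QBM(\varphi;\fbb)=(1-R)_+\le 1$. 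The positive part also gives $(1-R)_+\ge 0$ for every real $t$. This yields $QBM\in[0,1]$ with no quantitative comparison between quantile displacement and pointwise displacement needed.

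The main obstacle is the degenerate case where the denominator vanishes, i.e.\ $\fbb(\varphi(x_i))=\fbb(x_i)$ for all $i$. I would handle it by observing that the two output multisets then coincide. Empirical quantiles depend only on the empirical distribution (whatever convention is used, applied identically to both samples), so $q^{(\varphi(\calA))}_k=q^{(\calA)}_k$ for all $k$ and the numerator vanishes too. The ratio is then $0/0$, and I would adopt the natural convention $R:=0$, giving $QBM=1\in[0,1]$; this is consistent with "no coherence loss". With any other convention for $R\ge 0$ the same bound argument still applies. So the proof reduces to recording this convention together with the two-line bound above.

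Optionally, I would remark why the clipping is needed: $R$ can exceed $1$. For example, a uniform shift $\fbb(\varphi(x))=\fbb(x)+c$ makes $R=1$ exactly, and the quantile map is not a contraction in general for arbitrary $q$, so the positive part is what secures the lower bound.
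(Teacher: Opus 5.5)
Your proof is correct and is essentially the paper's argument: the ratio under the square root is nonnegative, so $1-R\le 1$, and the positive part supplies the lower bound $0$. Your explicit treatment of the degenerate $0/0$ case, where $\fbb(\varphi(x_i))=\fbb(x_i)$ for all $i$, is a small addition that the paper's one-line proof leaves unaddressed.
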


Notice that QBM depends on both the output distributions and the number of quantiles we consider. 
Coarser quantile grids summarize location-level changes, whereas finer grids retain more information about the empirical distributions of \(\{\fbb(x)\}_{x\in\calA}\) and \(\{\fbb(\varphi(x))\}_{x\in\calA}\). 
When the grid is refined to the empirical order statistics, the QBM becomes a normalized optimal-transport metric, which we name \textit{Wasserstein Coherence Metric} (WCM) and does not depend on the quantile choice.
%


%
%

%

\begin{definition}
\label{def:WCM}
    Given an auditing set $\calA$, a perturbation $\varphi$, and a black-box model $\fbb$, we define the Wasserstein Coherence Metric (WCM) as follows
    \begin{equation}
    \label{def:WCM_defo}
        WCM(\varphi;\fbb):=1-\sqrt{\frac{\min_{\pi\in \Pi_n(\calA)}\sum_{x\in\calA}|\fbb(\varphi(\pi(x)))-\fbb(x)|^2}{\sum_{x\in\calA}|\fbb(\varphi(x))-\fbb(x)|^2}},
    \end{equation}
    where \(\Pi(\calA)\) denotes the set of all permutations of the auditing inputs in \(\calA\).
\end{definition}

%
%

%

The denominator of \eqref{def:WCM_defo} measures the paired output displacement under the natural
correspondence $x \mapsto \varphi(x)$, while the numerator measures the smallest
displacement attainable after optimally reordering the perturbed outputs. WCM is
therefore small when the natural input-wise pairing is close to this optimal
reordering, and large when the perturbed outputs can be substantially better
matched only after reordering.

%

%
%

Definition \ref{def:WCM} yields two useful invariance properties \begin{enumerate*}[label=(\roman*)]
    \item WCM is normalized and
    \item WCM is invariant under positive rescaling of the model output. 
\end{enumerate*} 
%
%
These properties make WCM an easy to interpret and scale-free measure of ordinal coherence.
%

\begin{proposition}
\label{prop1}
    For every auditing set $\calA$, perturbation $\varphi$, and black-box model $\fbb$, it holds $WCM(\varphi;\fbb)\in[0,1]$.
    Moreover, 
    \begin{enumerate*}[label=(\roman*)]
        \item $WCM(\varphi;\fbb)=0$ if and only if the pairing $\{(\fbb(x),\fbb(\varphi(x)))\}_{x\in\calA}$ is a coupling between the perturbed and original output that maximizes the covariance; and
        \item $WCM(\varphi;\fbb)=1$ if and only if there exists a permutation, namely $\pi$, such that $\pi$ is not the identity and for which it holds $\fbb(x)=\fbb(\varphi(\pi(x)))$ for every $x\in\calA$.
    \end{enumerate*}
    Lastly, the $WCM$ is invariant under change of scales, meaning that its value does not change if we alter unit of measure of the output layer, i.e. $WCM(\varphi;\fbb)=WCM(\varphi;\lambda\fbb)$ for any $\lambda>0$.
\end{proposition}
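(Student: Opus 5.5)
The plan is to read every claim of Proposition~\ref{prop1} off one structural fact: the identity permutation is feasible in the minimization in \eqref{def:WCM_defo}. Write $a_x=\fbb(x)$, $b_x=\fbb(\varphi(x))$,
\[
N(\pi)=\sum_{x\in\calA}|b_{\pi(x)}-a_x|^2, \qquad D=\sum_{x\in\calA}|b_x-a_x|^2=N(\mathrm{id}).
\]
Throughout we assume $D>0$, since otherwise the ratio in \eqref{def:WCM_defo} is undefined; we will state this non-degeneracy assumption explicitly at the start. Then $0\le \min_\pi N(\pi)\le N(\mathrm{id})=D$. Hence the ratio lies in $[0,1]$, so does its square root, and therefore $WCM(\varphi;\fbb)\in[0,1]$. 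Unlike QBM in Proposition~\ref{prop0}, no positive part is needed here.

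For (i), $WCM=0$ holds iff $\min_\pi N(\pi)=D$, i.e.\ iff the identity is an optimal permutation. Expanding the square gives
\[
N(\pi)=\sum_x a_x^2+\sum_x b_x^2-2\sum_x a_x b_{\pi(x)}.
\]
The first two sums do not depend on $\pi$, because $\pi$ only permutes the $b$'s. So minimizing $N$ is equivalent to maximizing $\sum_x a_x b_{\pi(x)}$. Because the marginal empirical means are also permutation-invariant, this is the same as maximizing the empirical covariance of the coupling $\{(a_x,b_{\pi(x)})\}$. Thus $WCM=0$ iff the natural pairing maximizes the covariance among all couplings of the two empirical distributions that are induced by permutations. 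We may also add, via the rearrangement inequality, that this happens iff the natural pairing is comonotone. It is worth remarking that for uniform empirical measures on $n$ points, permutation couplings suffice by Birkhoff's theorem, so this is also the maximum over all couplings.

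For (ii), $WCM=1$ holds iff $\min_\pi N(\pi)=0$, i.e.\ iff there is a permutation $\pi$ with $b_{\pi(x)}=a_x$ for all $x$. This $\pi$ cannot be the identity, since that would force $D=N(\mathrm{id})=0$, contradicting $D>0$. Conversely, any such $\pi$ gives numerator $0$ and hence $WCM=1$. Scale invariance follows because replacing $\fbb$ by $\lambda\fbb$ multiplies every $N(\pi)$ and $D$ by $\lambda^2$. The minimizer set is therefore unchanged and the ratio is preserved; this holds for $\lambda>0$, and in fact for any $\lambda\neq 0$.

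The only genuine subtlety, and the step I expect to be the main obstacle, is the degenerate case $D=0$ together with the precise reading of ``maximizes the covariance'' in (i). I will handle both by fixing the non-degeneracy assumption up front and by stating that the covariance is taken over permutation-induced couplings, with the Birkhoff remark covering general couplings.
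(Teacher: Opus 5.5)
Your proof is correct and follows the same route as the paper's: feasibility of the identity permutation gives $WCM\in[0,1]$, the cases $WCM=0$ and $WCM=1$ reduce to the minimum equalling the denominator or vanishing, and scale invariance comes from the common $\lambda^2$ factor. Your version is more complete than the paper's. You state the nondegeneracy $D>0$ that the paper uses only implicitly (as $V_{\calA}\neq V_{\varphi(\calA)}$), and you supply the steps the paper leaves unstated: the expansion linking the minimization to covariance maximization, and the converse directions of (i) and (ii).
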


From Proposition \ref{prop1}, we infer that the WCM measures the covariance alignment between the natural coupling induced by the data and the perturbation, that is $(\fbb(x),\fbb(\varphi(x)))$, and the one that maximises the covariance.
This induces a natural connection between the WCM and the Wasserstein Distance between the empirical distribution induced by the black-box model $\fbb$ output before and after the perturbation.
Consequently, we can express the minimum in the numerator of \eqref{def:WCM_defo} as the euclidean norm of the vectors $(\fbb(\varphi(x)))_{x\in \calA}$ and $(\fbb(x))_{x\in \calA}$ rearranged in non-decreasing order.

\begin{proposition}
\label{prop2}
    Given a perturbation $\varphi$, a black-box model $\fbb$, and an auditing set $\calA$, then
    \begin{equation}
    \label{eq:WCM_to_Wass}
        WCM(\varphi;\fbb):=1-\sqrt{\frac{\sum_{i=1}^{|\calA|}\big((V_{\calA})_{r_i}-(V_{\varphi(\calA)})_{r_i^{(\varphi)}}\big)^2}{\sum_{i=1}^{|\calA|}\big((V_{\calA})_{i}-(V_{\varphi(\calA)})_{i}\big)^2}}%
    \end{equation}
    where 
    %
    \begin{enumerate*}[label=(\roman*)]
        \item $V_{\calA}$ and $V_{\varphi(\calA)}$ are defined as in \eqref{eq:output_vectors} and 
        \item $r_i$ is a monotone non-decreasing reordering of $V_{\calA}$ and $r_i^{(\varphi)}$ is a non-decreasing reordering of $V_{\phi(\calA)}$.
    \end{enumerate*}
\end{proposition}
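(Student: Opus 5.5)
The denominators of \eqref{def:WCM_defo} and \eqref{eq:WCM_to_Wass} are the same quantity written in two notations: index $\calA=\{x_i\}_{i=1}^N$, so that $\sum_{x\in\calA}|\fbb(\varphi(x))-\fbb(x)|^2=\sum_i((V_\calA)_i-(V_{\varphi(\calA)})_i)^2$. The proof therefore reduces to identifying the numerators. With $a_i=(V_\calA)_i$ and $b_i=(V_{\varphi(\calA)})_i$, a permutation $\pi$ of $\calA$ is the same thing as a permutation $\sigma$ of $[N]$. The numerator of \eqref{def:WCM_defo} is then $\min_{\sigma}\sum_i(a_i-b_{\sigma(i)})^2$, and I will show this minimum is attained by the monotone matching $\sum_i(a_{r_i}-b_{r^{(\varphi)}_i})^2$.

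\textbf{Step 1: reduce to maximising a covariance.} Expanding the square gives
\[\sum_i(a_i-b_{\sigma(i)})^2=\sum_i a_i^2+\sum_i b_i^2-2\sum_i a_ib_{\sigma(i)}.\]
The first two sums do not depend on $\sigma$, so minimising over $\sigma$ is equivalent to maximising $\sum_i a_ib_{\sigma(i)}$. This is the same reduction behind item (i) of Proposition~\ref{prop1}.

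\textbf{Step 2: the rearrangement inequality.} This step carries the real content. I would prove it by an exchange argument. Let $\sigma^\ast$ be a maximiser, which exists because there are finitely many permutations. Suppose there are indices $i,j$ with $a_i<a_j$ but $b_{\sigma^\ast(i)}>b_{\sigma^\ast(j)}$. Swapping $\sigma^\ast(i)$ and $\sigma^\ast(j)$ changes the sum by
\[(a_j-a_i)\bigl(b_{\sigma^\ast(i)}-b_{\sigma^\ast(j)}\bigr)>0,\]
which contradicts maximality. Hence a maximiser pairs the entries of $a$ and $b$ comonotonically. Ties need care: among maximisers, choose one that minimises the number of inversions relative to the orderings $r$ and $r^{(\varphi)}$. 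Swaps with $a_i=a_j$ or with equal $b$-values leave the sum unchanged, so such an inversion-minimising maximiser can be taken to be exactly the matching of $a_{r_i}$ with $b_{r^{(\varphi)}_i}$. Every non-decreasing reordering gives the same multiset of sorted values, so the value $\sum_i(a_{r_i}-b_{r^{(\varphi)}_i})^2$ does not depend on which reorderings are chosen.

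\textbf{Step 3: conclude.} Substituting the minimum from Step 2 into the numerator of \eqref{def:WCM_defo}, with the denominator unchanged, gives \eqref{eq:WCM_to_Wass}. The degenerate case of a zero denominator is handled by whatever convention was adopted in Definition~\ref{def:WCM}, since both sides are then defined identically.

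I expect the main obstacle to be only the bookkeeping in Step 2. The argument must show that the minimum over permutations of $\calA$, composed with $\varphi$, coincides with a minimum over index permutations pairing two sorted vectors, and it must treat ties cleanly. The rest is algebra. I will also note that the resulting expression is $N\,W_2^2$ between the two empirical measures, using the standard fact that the optimal coupling between measures on $\mathbb{R}$ is the quantile coupling. This links the result to the Wasserstein interpretation, but the self-contained exchange argument above is what I would actually write.
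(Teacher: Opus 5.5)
Your proposal is correct and is essentially the paper's argument. The paper invokes the uncrossing inequality $(A-D)^2+(B-C)^2>(A-C)^2+(B-D)^2$ for $A<B$, $C<D$ and iteratively reorders to reach the sorted matching; the two sides differ by $2(B-A)(D-C)$, which is exactly twice your swap gain after expanding the square, and your inversion-minimising maximiser makes rigorous the tie-handling and termination that the paper's ``iteratively reordering'' leaves implicit.
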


\begin{remark}
    Owing to Proposition \ref{prop2} and the characterization of the Wasserstein Distances $W_2$~\cite{villani2009optimal}, we have the following.
    If we set $\mu_{V_{\calA}}=\frac{1}{|\calA|}\sum_{x_i\in\calA}\delta_{\fbb(x_i)}$ and $\mu_{V_{\varphi(\calA)}}=\frac{1}{|\calA|}\sum_{x_i\in\calA}\delta_{\fbb(\varphi(x_i))}$ to be the empirical measures induced by $V_{\calA}$ and $V_{\varphi(\calA)}$, respectively, then it holds $W_2(\mu_{V_{\varphi(\calA)}},\mu_{V_{\calA}})=\frac{1}{|\calA|}\sum_{i=1}^{|\calA|}\big((V_{\calA})_{r_i}-(V_{\varphi(\calA)})_{r_i^{(\varphi)}}\big)^2$.
    Within this interpretation, the WCM is the percentage of ranking coherence that cannot be explained by the natural variance of the model.
\end{remark}
Lastly, we introduce the third I-SAFE metric, the Translation-Invariant WCM, which measures only differences in distributional shape.

\begin{definition}
    Given an auditing set $\calA$, a perturbation $\varphi$, and a black-box model $\fbb$, we define the translation invariant Wasserstein Coherence Metric (TI-WCM) as follows
    \begin{equation}
        TI-WCM(\varphi,\fbb):=1-\frac{\sqrt{W_2(\mu_{V_{\varphi(\calA)}},\mu_{V_{\calA}})^2-(m_\calA-m_{\varphi(\calA)})^2}}{\ell_2(V_{\varphi(\calA)},V_{\calA})},
    \end{equation}
    where $m_\calA$ is the mean value of $\{\fbb(x)\}_{x\in\calA}$ and $m_{\varphi(\calA)}$ is the mean value of $\{\fbb(\varphi(x)\}_{x\in\calA}$.
\end{definition}

The TI-WCM is invariant under model bias, meaning that if we add a constant value to the output of the model $\fbb$, the TI-WCM does not change.
This makes the TI-WCM a stricter metric than the WCM introduced in Definition \ref{def:WCM}, as shown in~\cite{auricchio2020equivalence}.
Lastly, we notice that the TI-WCM is still a normalized value between $0$ and $1$ that measures the coherence between the ranking of the two output sets.
The full discussion is deferred to Appendix~\ref{app:tiwcm}.

\section{Experiments}
\label{sec:experiments}

We evaluate \textsc{I-SAFE} on DTI prediction, a scientific task in which
binding-pocket annotations provide a natural structural prior over target residues.
In kinase--inhibitor prediction, these annotations identify protein regions involved
in molecular recognition, while sequence-based DTI models may also exploit global
sequence statistics, target bias, or dataset-specific regularities~\cite{lin2025tapb,wallach2018ligand,graber2025resolving}.
This makes DTI a suitable setting for testing whether models with comparable
predictive performance exhibit different prior-relative interventional behaviour.
The goal is not to rank models by accuracy, but to evaluate how raw output
distributions reorganize under matched perturbations of mechanistic and spurious input components.



\subsection{Task and framework instantiation}
\label{sec:task}

Each input is a drug--target pair \(x=(d,t)\), where \(d\) is a drug represented
by a Simplified Molecular Input Line Entry System (SMILES) string and
\(t=(t_1,\ldots,t_M)\) is a protein target represented as an amino-acid
sequence of length \(M=|t|\). The audited model returns a raw score
\(\fbb(d,t)\in\mathbb{R}\), used directly for the \textsc{I-SAFE} metrics before any
thresholding, calibration, or downstream decision rule. 
Labels $y\in\{0,1\}$ define the underlying interaction task and are used only for predictive-regime verification; the \textsc{I-SAFE} metrics are computed exclusively from raw model scores, without label information at any stage of the audit.
In this instantiation, the identifiable input components are the residues of
the target sequence, so that \(x^{(m)}=t_m\) for \(m=1,\ldots,M\). Interventions
act exclusively on the protein target \(t\), while the drug representation
\(d\) is held fixed. Thus, the audit characterizes target-side interventional
behaviour under a fixed drug context.
 
\paragraph{Structural prior.}
The structural prior \(\calP(t)\) is derived from KLIFS~\cite{kooistra2016klifs,kanev2021klifs}, a curated
resource of kinase--ligand structural annotations. For each kinase target,
\(\calP(t)\subseteq\{1,\ldots,M\}\) denotes the residue indices corresponding
to the annotated binding pocket. This prior is specified independently of the
audited models and is used only to define prior-aligned intervention scopes. In
the auditing set, the median pocket size is 85 residues.

%
%
 
 
\subsection{Dataset and auditing set}
\label{sec:data}

Experiments are conducted on the Davis benchmark~\cite{davis2011comprehensive},
a standard kinase inhibitor dataset for DTI prediction. We use the
train/validation/test splits released with the TAPB benchmark~\cite{lin2025tapb}
and apply them consistently across all audited models. These splits are used for
model training and predictive-regime verification, while interventional auditing
is performed post-hoc on the structurally valid subset of the test split.
The auditing set is obtained by deterministic filtering. We retain targets with
KLIFS binding-pocket annotations and for which annotated pocket residues can be
unambiguously mapped to the amino-acid sequence. This yields \(208\) kinase
targets from the \(379\) test targets and \(3{,}044\) drug--target pairs, each
with a unique residue-level interventional representation. The median KLIFS
pocket size is \(85\) residues, and exact cardinality matching is achieved for
all audited interventions. Full structural coverage statistics are reported in
Appendix Table~\ref{app:tab:coverage}.

For each audited pair \((d,t)\in\calA\) and perturbation operator
\(\varphi\in\Phi\), we construct a matched pair of intervention scopes.
The prior-aligned scope is drawn from the KLIFS binding-pocket residues
\(\calP(t)\), whereas the prior-misaligned scope is sampled from
\(\{1,\ldots,M\}\setminus\calP(t)\) with identical cardinality. The same
operator is applied to both scopes, ensuring that the resulting
intervention classes differ only in their alignment with the structural
prior while preserving the perturbation rule and intervention size. In
our experiments, \(\Phi\) contains masking, which replaces selected
residues with a dedicated mask token, and physicochemically constrained
substitution, which replaces each selected residue with an amino acid
from the same biochemical class.


 
\subsection{Audited models and predictive regime}
\label{sec:models}

We audit three sequence-based DTI models that differ in their target-side inductive biases. \textbf{DeepDTA}~\cite{ozturk2018deepdta} provides a
convolutional baseline over SMILES and amino-acid sequences.
\textbf{DeepConvDTI}~\cite{lee2019deepconv} emphasizes protein-sequence representation through target-side convolutions and global pooling.
\textbf{TAPB}~\cite{lin2025tapb} incorporates target-aware attention together with an interventional debiasing objective for target-prior bias. All models are
audited post-hoc from frozen checkpoints trained with their original protocols, without model-specific tuning or adjustment during the \textsc{I-SAFE} audit.

\paragraph{Predictive regime verification.}
Before auditing, we use the area under the receiver operating characteristic
curve (AUROC) on the Davis auditing subset \(\calA\) only to
verify that the models operate in a comparable predictive regime.
Appendix Table~\ref{app:tab:auroc} reports mean AUROC across five training
seeds, with 95\,\% confidence intervals obtained by non-parametric bootstrap
($B=100$) over $\calA$: DeepConvDTI \(0.876\) \([0.875,0.878]\), TAPB \(0.882\) \([0.851,0.899]\), and DeepDTA \(0.907\) \([0.902,0.913]\). The
largest inter-model difference is approximately three AUROC points. We
therefore treat subsequent \textsc{I-SAFE} contrasts as comparisons of
interventional response within a common predictive regime, rather than as
differences in baseline predictive performance.




\subsection{Interventional response structure}
\label{sec:results}

We analyze the interventional response profiles induced by mechanistic and spurious perturbations. For each metric
\(M\in\{\mathrm{QBM},\mathrm{WCM},\mathrm{TI\text{-}WCM}\}\), we report both class-specific values and the prior-relative contrast \(\Delta M=M_{\mathrm{spurious}}-M_{\mathrm{mechanistic}}\). Since lower values indicate more coherent output responses, positive \(\Delta M\) denotes greater
coherence under mechanistic perturbations than under matched spurious controls; negative values denote the reverse. QBM is computed using \(q=(0.25,0.50,0.75)\), the lower quartile, median, and upper quartile of the output distribution; sensitivity to the quantile grid is reported in Appendix Table~\ref{app:tab:qbm-sensitivity}.

\paragraph{Absolute coherence by perturbation class.}
Table~\ref{tab:absolute} reports the \textsc{I-SAFE} metrics separately for
mechanistic and spurious perturbations. These values are diagnostic
rather than ranking metrics. They identify which aspect of the raw output profile is reorganized by a given intervention, separating changes in distributional location, output ordering,
and residual shape.

Under mechanistic perturbations, \textbf{TAPB} shows the strongest coherence on
the two axes that retain location information, with the lowest QBM
(\(0.515\), \([0.476,0.553]\)) and WCM
(\(0.443\), \([0.419,0.467]\)). Thus, perturbing KLIFS pocket residues induces
an organized change in \textbf{TAPB} raw scores, both in distributional location and output
ranking. \textbf{DeepDTA} shows the least coherent response under the same perturbations,
with the highest QBM (\(0.842\), \([0.784,0.897]\)) and WCM
(\(0.768\), \([0.735,0.799]\)), indicating that predictive adequacy does not
imply coherent behaviour under structurally targeted intervention. \textbf{DeepConvDTI} lies between these regimes, with high QBM (\(0.737\), \([0.678,0.799]\)) but lower WCM (\(0.512\), \([0.488,0.537]\)), suggesting partial preservation of
score ordering without an equally coherent quantile-level shift.

The comparison with spurious perturbations already reveals distinct response
profiles. For \textbf{TAPB}, QBM and WCM are lower under mechanistic than
spurious perturbations, whereas \textbf{DeepConvDTI} shows the reverse pattern
on WCM and \textbf{DeepDTA} changes little across classes. TI-WCM further
qualifies the interpretation: \textbf{TAPB} has high mechanistic TI-WCM
(\(0.785\), \([0.751,0.812]\)) despite favourable QBM and WCM, indicating that
its response is not primarily a residual shape-preserving effect after removing
the mean shift. The absolute metrics therefore localize the main signal to
location and ranking. The contrast analysis below tests whether this structure is
selective for KLIFS-aligned pocket perturbations relative to non-pocket
controls.

\begin{table}[t]
\caption{\textsc{I-SAFE} absolute metric values under mechanistic and spurious
perturbations (operator=all, 5 seeds, 95\,\% CI).}
\label{tab:absolute}
\centering
\small
\begin{tabular}{llcc}
\toprule
Metric & Model & Mechanistic (95\,\% CI) & Spurious (95\,\% CI) \\
\midrule
\multirow{3}{*}{QBM}
  & DeepConvDTI & 0.737 (0.678, 0.799) & 0.726 (0.673, 0.774) \\
  & DeepDTA     & 0.842 (0.784, 0.897) & 0.821 (0.766, 0.873) \\
  & TAPB        & 0.515 (0.476, 0.553) & 0.607 (0.572, 0.647) \\
\midrule
\multirow{3}{*}{WCM}
  & DeepConvDTI & 0.512 (0.488, 0.537) & 0.466 (0.441, 0.489) \\
  & DeepDTA     & 0.768 (0.735, 0.799) & 0.755 (0.724, 0.785) \\
  & TAPB        & 0.443 (0.419, 0.467) & 0.507 (0.480, 0.533) \\
\midrule
\multirow{3}{*}{TI-WCM}
  & DeepConvDTI & 0.521 (0.495, 0.546) & 0.478 (0.453, 0.501) \\
  & DeepDTA     & 0.812 (0.783, 0.841) & 0.798 (0.771, 0.827) \\
  & TAPB        & 0.785 (0.751, 0.812) & 0.747 (0.713, 0.776) \\
\bottomrule
\end{tabular}
\end{table}

\paragraph{Prior-relative coherence contrasts.}
Figure~\ref{fig:delta} (and Table~\ref{app:tab:delta} in Appendix) reports the prior-relative contrasts.
These contrasts test whether perturbing KLIFS
pocket residues induces a more coherent distributional response than applying
the same operator, with the same cardinality, outside the pocket.

\textbf{TAPB} shows the clearest prior-relative pattern on the two metrics that retain location information. It achieves \(\Delta\mathrm{QBM}=+0.093\) (\([0.039,0.147]\)) and \(\Delta\mathrm{WCM}=+0.063\) (\([0.028,0.099]\)). This indicates that its coherent response is selectively
associated with KLIFS-aligned pocket perturbations rather than with generic target-sequence perturbation, and is expressed both through quantile-level displacement and output ordering. The other architectures follow a different pattern. For \textbf{DeepConvDTI}, \(\Delta\mathrm{QBM}=-0.011\) (\([-0.090,0.068]\)) and \(\Delta\mathrm{WCM}=-0.046\) (\([-0.081,-0.012]\)), indicating that the ordinal component is more coherent under non-pocket controls than under pocket
perturbations. For \textbf{DeepDTA}, \(\Delta\mathrm{QBM}=-0.021\) (\([-0.099,0.057]\)) and \(\Delta\mathrm{WCM}=-0.013\) (\([-0.057,0.031]\)), showing no comparable prior-aligned coherence advantage.

The translation-invariant component further refines the interpretation. \(\Delta\mathrm{TI\text{-}WCM}\) is negative across all models:
\(-0.043\) (\([-0.078,-0.009]\)) for \textbf{DeepConvDTI},
\(-0.014\) (\([-0.054,0.026]\)) for \textbf{DeepDTA}, and
\(-0.038\) (\([-0.082,0.006]\)) for \textbf{TAPB}. Hence, the positive TAPB
signal is localized to location and ranking rather than to residual shape
preservation after removing the mean shift. Overall, among models with
comparable AUROC, only \textbf{TAPB} exhibits a prior-selective distributional
response to the binding-pocket prior.

\begin{figure}[t]
\centering
\includegraphics[width=0.98\linewidth]{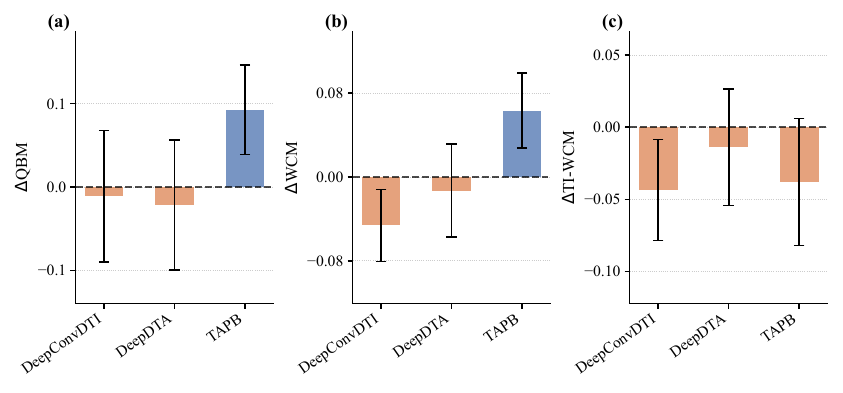}
\caption{\textsc{I-SAFE} prior-relative coherence contrasts on the Davis benchmark:
$\Delta$QBM (a), $\Delta$WCM (b), and $\Delta$TI-WCM (c), computed as spurious
minus mechanistic coherence. The dashed line marks no differential coherence;
positive values indicate greater coherence under mechanistic perturbations.
Error bars denote 95\,\% confidence intervals across five seeds.}
\label{fig:delta}
\end{figure}

\section{Discussion}
\label{sec:discuss}

\textsc{I-SAFE} introduces a distributional layer for post-hoc structural
auditing of scientific predictors. It moves the audit beyond asking whether a
model is sensitive to prior-aligned perturbations, toward asking how its raw
output distribution reorganizes under intervention. This distinction matters
because benchmark performance does not establish that model behaviour is
organized around the structures that domain knowledge identifies as relevant.

The Davis DTI case study illustrates this separation. The audited models operate
in a comparable predictive regime, yet exhibit different interventional response
profiles. \textbf{TAPB} is the only architecture with positive contrasts on both the
quantile-level and ordinal axes, while the translation-invariant component does
not provide a positive differential signal. The signal detected by
\textsc{I-SAFE} is therefore not a generic distributional effect. It is expressed
through location-level and ranking-level organization of raw output scores under
KLIFS binding-pocket perturbations. These results separate predictive
performance from distributional coherence as distinct dimensions of model
behaviour.

\paragraph{Prior-relative interpretation}
The contrast metrics are the prior-relative component of the audit. They compare the
coherence induced by perturbations of prior-selected components with the
coherence induced by controls outside the prior under the same intervention
design. A positive contrast is therefore a behavioural statement about a fixed trained
model relative to an independently specified structural prior. It does not imply
that the prior is complete or that the model has recovered the causal mechanism
of the data-generating process. It shows that the model's output distribution is
more coherently organized under perturbations of the prior-selected region.

\paragraph{Beyond scalar auditing.}
Scalar auditing~\cite{tarantino2026isaacauditingcausalreasoning} summarizes relative
interventional sensitivity as a single magnitude-based contrast. \textsc{I-SAFE}
retains more of the response structure. QBM measures coherent movement of
representative output locations, WCM measures preservation of output ordering,
and TI-WCM separates residual shape coherence from global translation. In our
experiments, \textbf{TAPB}'s positive signal appears on QBM and WCM, but not on
TI-WCM, indicating that its prior-relative behaviour is carried by location and
ranking rather than by shape preservation. This conclusion cannot be obtained
from accuracy or scalar sensitivity alone.

\paragraph{Assumptions, scope, and generalization.}
The structural prior defines the audited contrast. Here, KLIFS binding-pocket
annotations provide an external, biologically grounded prior over kinase target
residues, specifying a meaningful axis of comparison rather than a ground-truth
causal mechanism. Accordingly, the empirical claims are specific to target-side
interventions on Davis, with the drug held fixed. Outside-prior controls are
matched in operator and cardinality, but not in all local sequence or geometric
properties. Natural extensions include context-matched controls, drug-side or
joint interventions, and alternative structural priors.

Overall, \textsc{I-SAFE} contributes an evaluation methodology for scientific
AI systems whose benchmark performance is insufficient to characterize their
structural behaviour. Predictive metrics assess whether a model performs well;
scalar audits summarize whether it responds to prior-guided perturbations;
\textsc{I-SAFE} evaluates how the output distribution reorganizes under those
perturbations. By turning structurally guided interventions into interpretable distributional evidence, \textsc{I-SAFE} provides a reusable, model-agnostic evaluation protocol that supports more precise evaluative claims about black-box scientific predictors across scientific domains.


\bibliographystyle{unsrt}
\bibliography{isafe.bib}


\clearpage

\appendix

\section{Appendix}

In this appendix we report the missing proof and all the technical discussion omitted from the main body of the paper.

\subsection{Proofs}

First, we report the missing proofs.

\begin{proof}[Proof of Proposition \ref{prop0}]
    It trivially follows from the fact that

    \begin{equation}
        0\le \frac{\sum_{i=1}^K|q_i^{(\calA)}-q_i^{(\varphi(\calA))}|}{\sum_{i=1}^{|\calA|} |\fbb(x_i)-\fbb(\varphi(\pi(x_i)))|^2}.
    \end{equation}
\end{proof}

\begin{proof}[Proof of Proposition \ref{prop1}]
    Since $\pi=Id$ is a feasible point for the minimization problem
    \begin{equation}
        \label{eq:appendix_reorder}
        \min_{\pi\in \Pi_n(\calA)} |\fbb(x_i)-\fbb(\varphi(\pi(x_i)))|^2
    \end{equation}
    we infer
    \begin{equation}
        \min_{\pi\in \Pi_n(\calA)}\sum_{i=1}^{|\calA|} |\fbb(x_i)-\fbb(\varphi(\pi(x_i)))|^2 \le \sum_{i=1}^{|\calA|}|\fbb(x_i)-\fbb(\varphi(x_i))|^2,
    \end{equation}
    thus $0\le\frac{\min_{\pi\in \Pi_n(\calA)}\sum_{i=1}^{|\calA|} |\fbb(x_i)-\fbb(\varphi(\pi(x_i)))|^2}{\sum_{i=1}^{|\calA|}|\fbb(x_i)-\fbb(\varphi(x_i))|^2}\le 1$, hence the first part of the proof.
    Let us now assume that $WCM(\varphi;\fbb)=0$, we then have that
    \begin{equation}
        \min_{\pi\in \Pi_n(\calA)}\sum_{i=1}^{|\calA|} |\fbb(x_i)-\fbb(\varphi(\pi(x_i)))|^2 = \sum_{i=1}^{|\calA|}|\fbb(x_i)-\fbb(\varphi(x_i))|^2,
    \end{equation}
    meaning that the identity permutation is an optimal solution to the minimization problem in \eqref{eq:appendix_reorder}

    Let us assume that $WCM(\varphi;\fbb)=1$ and $V_\calA\neq V_{\varphi(\calA)}$.
    We then have that $\min_{\pi\in \Pi_n(\calA)}\sum_{i=1}^{|\calA|} |\fbb(x_i)-\fbb(\varphi(\pi(x_i)))|^2 =0$, meaning that there exists a permutation $\pi$ that is not equal to the identity for which it holds
    \begin{equation}
        (V_\calA)_i = (V_{\varphi(\calA)})_{\pi(i)}
    \end{equation}
    for every $i=1,\dots,|\calA|$.

    To conclude, we notice that for every $\lambda$, it holds
    \begin{equation}
        \min_{\pi\in \Pi_n(\calA)} |\lambda\fbb(x_i)-\lambda\fbb(\varphi(\pi(x_i)))|^2 = \lambda^2\min_{\pi\in \Pi_n(\calA)} |\fbb(x_i)-\fbb(\varphi(\pi(x_i)))|^2
    \end{equation}
    Likewise,
    \begin{equation}
        \sum_{i=1}^{|\calA|}|\lambda\fbb(x_i)-\lambda\fbb(\varphi(x_i))|^2 =\lambda^2\sum_{i=1}^{|\calA|}|\fbb(x_i)-\fbb(\varphi(x_i))|^2,
    \end{equation}
    hence the scalar invariant property.
\end{proof}

\begin{proof}[Proof of Proposition \ref{prop2}]
    It follows from the fact that
    \begin{equation}
        (A-D)^2+(B-C)^2 > (A-C)^2 +(B-D)^2
    \end{equation}
    whenever $A<B$ and $C<D$.
    By iteratively reordering the entries of the vectors $V_\calA$ and $V_{\varphi(\calA)}$ we are able to show that the minimum of the problem \eqref{eq:appendix_reorder} is given by the increasing reordering, thus the thesis.
\end{proof}

\subsection{Translation-Invariant WCM: Properties and Discussion}\label{app:tiwcm}

Given a probability distribution $\mu$ with finite average $m_\mu$, we denote by $\hat\mu$ as the probability distribution shifted by $-m_{\mu}$, so that the average value of $\hat\mu$ is equal to $0$.
First, we recall that, given any couple of probability distributions with finite average, namely $\mu$ and $\nu$, it holds
\begin{equation}
    W_2^2(\mu,\nu)-(m_\mu-m_\nu)^2=W_2^2(\hat\mu,\hat\nu),
\end{equation}
thus we can rewrite the Ti-WCM as follows
    \begin{equation}
        TI-WCM(\varphi,\fbb):=1-\frac{W_2(\hat\mu_{V_{\varphi(\calA)}},\hat\mu_{V_{\calA}})}{\ell_2(V_{\varphi(\calA)},V_{\calA})}.
    \end{equation}
    As a consequence, we have that $0\le TI-WCM(\varphi,\fbb)$.
    Moreover, since $(m_\mu-m_\nu)^2\ge 0$, we have
    \begin{equation}
        W_2(\hat\mu_{V_{\varphi(\calA)}},\hat\mu_{V_{\calA}})\le W_2(\mu_{V_{\varphi(\calA)}},\mu_{V_{\calA}})
    \end{equation}
    which, used in conjunction with Proposition \ref{prop1} allows us to conclude that $TI-WCM(\varphi,\fbb)\in[0,1]$.
    As a byproduct, we infer that 
    \begin{equation}
        \frac{W_2(\hat\mu_{V_{\varphi(\calA)}},\hat\mu_{V_{\calA}})}{\ell_2(V_{\varphi(\calA)},V_{\calA})}\le \frac{W_2(\mu_{V_{\varphi(\calA)}},\mu_{V_{\calA}})}{\ell_2(V_{\varphi(\calA)},V_{\calA})}
    \end{equation}
    meaning that $0\le WCM(\varphi;\fbb)\le TI-WCM(\varphi;\fbb)\le 1$ for every $\varphi$ and every black-box model $\fbb$.
    In particular, whenever $WCM(\varphi;\fbb)=1$ then $TI-WCM(\varphi;\fbb)=1$ likewise, if $TI-WCM(\varphi;\fbb)=0$ then $WCM(\varphi;\fbb)=0$.
    We then notice that the TI-WCM is a stricter metric than the WCM.
    Indeed, if $TI-WCM(\varphi;\fbb)=1$ it must be the case that 
    \begin{enumerate}[label=(\roman*)]
        \item the model output distribution after the perturbation has the same average value as the model output distribution before the permutation and 
        \item it induces the same ranking ordering. 
    \end{enumerate}

    Lastly, it is easy to adapt the argument used to prove Proposition \ref{prop1} to show that also TI-WCM is scale invariant.

\subsection{Additional Tables}\label{app:tab}

\begin{table}[h]
\caption{Structural coverage and auditing-set composition for the Davis
benchmark. Rows report the successive requirements needed to define
well-posed KLIFS-based interventions under the matched cardinality design.}
\label{app:tab:coverage}
\centering
\small
\begin{tabular}{lcc}
\toprule
Criterion & Count & Fraction of test split \\
\midrule
Targets in test split           & 379   & \\
With KLIFS annotation           & 321   & 84.7\% \\
With realizable interventions   & 208   & 54.9\% \\
\midrule
Drug--target pairs in \(\calA\) & 3,044 & \\
\quad of which \(y=1\)          & 154   & 5.1\% \\
\quad of which \(y=0\)          & 2,890 & 94.9\% \\
\midrule
Median pocket size \(|\calP(t)|\) & 85 residues & \\
Exact cardinality matching        & 100\% & \\
\bottomrule
\end{tabular}
\end{table}

\begin{table}[h]
\caption{Predictive performance on the Davis auditing subset
\((208\) targets, \(3{,}044\) drug--target pairs). AUROC is reported as mean
with 95\% confidence intervals across five training seeds and is used only to
verify a comparable predictive regime before interventional auditing.}
\label{app:tab:auroc}
\centering
\small
\begin{tabular}{lc}
\toprule
Model & AUROC (95\% CI) \\
\midrule
DeepConvDTI~\cite{lee2019deepconv} & 0.876 (0.875, 0.878) \\
TAPB~\cite{lin2025tapb}            & 0.882 (0.851, 0.899) \\
DeepDTA~\cite{ozturk2018deepdta}   & 0.907 (0.902, 0.913) \\
\bottomrule
\end{tabular}
\end{table}

\begin{table}[h]
\caption{\textsc{I-SAFE} prior-relative coherence contrasts on the Davis benchmark (operator=all, five seeds, 95\,\% CI). Contrasts are defined as spurious minus mechanistic. Positive values indicate greater coherence under mechanistic
perturbations than under spurious controls.}
\label{app:tab:delta}
\centering
\small
\begin{tabular}{lccc}
\toprule
Model
  & $\Delta$QBM (95\,\% CI)
  & $\Delta$WCM (95\,\% CI)
  & $\Delta$TI-WCM (95\,\% CI) \\
\midrule
DeepConvDTI
  & $-$0.011 ($-$0.090,\phantom{$-$}0.068)
  & $-$0.046 ($-$0.081, $-$0.012)
  & $-$0.043 ($-$0.078, $-$0.009) \\
DeepDTA
  & $-$0.021 ($-$0.099,\phantom{$-$}0.057)
  & $-$0.013 ($-$0.057,\phantom{$-$}0.031)
  & $-$0.014 ($-$0.054,\phantom{$-$}0.026) \\
TAPB
  & $+$0.093 (\phantom{$-$}0.039,\phantom{$-$}0.147)
  & $+$0.063 (\phantom{$-$}0.028,\phantom{$-$}0.099)
  & $-$0.038 ($-$0.082,\phantom{$-$}0.006) \\
\bottomrule
\end{tabular}
\end{table}

\begin{table}[h]
\caption{QBM sensitivity to the quantile grid (operator=all,
five seeds, 95\,\% CI).
$\Delta\text{QBM} = \text{QBM}_{\text{spur}} - \text{QBM}_{\text{mech}}$;
positive values indicate greater coherence under mechanistic perturbations.}
\label{app:tab:qbm-sensitivity}
\centering
\small
\begin{tabular}{llccc}
\toprule
Grid & Model
  & QBM$_{\text{mech}}$ (95\,\% CI)
  & QBM$_{\text{spur}}$ (95\,\% CI)
  & $\Delta$QBM (95\,\% CI) \\
\midrule
\multirow{3}{*}{$K=3$}
  & DeepConvDTI & 0.737 (0.678, 0.799) & 0.726 (0.673, 0.774) & $-$0.011 ($-$0.090,\phantom{$-$}0.068) \\
  & DeepDTA     & 0.842 (0.784, 0.897) & 0.821 (0.766, 0.873) & $-$0.021 ($-$0.099,\phantom{$-$}0.057) \\
  & TAPB        & 0.515 (0.476, 0.553) & 0.607 (0.572, 0.647) & $\mathbf{+}$0.093 (\phantom{$-$}0.039,\phantom{$-$}0.147) \\
\midrule
\multirow{3}{*}{$K=5$}
  & DeepConvDTI & 0.547 (0.486, 0.608) & 0.509 (0.464, 0.556) & $-$0.038 ($-$0.114,\phantom{$-$}0.038) \\
  & DeepDTA     & 0.779 (0.710, 0.846) & 0.746 (0.687, 0.807) & $-$0.033 ($-$0.123,\phantom{$-$}0.058) \\
  & TAPB        & 0.466 (0.419, 0.507) & 0.519 (0.474, 0.561) & $+$0.054 ($-$0.008,\phantom{$-$}0.116) \\
\midrule
\multirow{3}{*}{$K=9$}
  & DeepConvDTI & 0.617 (0.571, 0.660) & 0.586 (0.552, 0.622) & $-$0.031 ($-$0.088,\phantom{$-$}0.026) \\
  & DeepDTA     & 0.799 (0.752, 0.846) & 0.771 (0.723, 0.815) & $-$0.028 ($-$0.093,\phantom{$-$}0.038) \\
  & TAPB        & 0.484 (0.456, 0.513) & 0.553 (0.519, 0.584) & $\mathbf{+}$0.069 (\phantom{$-$}0.026,\phantom{$-$}0.112) \\
\bottomrule
\end{tabular}
\end{table}

\clearpage
\subsection{Computational Details}
\label{app:compute}

All \textsc{I-SAFE} analyses are performed post hoc on fixed model checkpoints,
with no retraining, fine-tuning, or audit-specific model optimization. To support
reproducibility, the supplemental archive includes the per-seed audit outputs
from which all reported results are derived. Starting from these outputs, the
reproduction pipeline, comprising AUROC verification, \textsc{I-SAFE} metric
estimation, confidence-interval computation, figure generation, and QBM
sensitivity analysis, completes in approximately 17 minutes and does not require
dedicated GPU acceleration. This runtime was measured on a Windows 11 laptop
with an AMD Ryzen AI 7 350 processor, 31.3 GB RAM, and Python 3.10.19 from
conda-forge. Model training is separate from the released audit pipeline and
followed the protocols and original implementations described in prior
work~\cite{ozturk2018deepdta,lee2019deepconv,lin2025tapb}.


\end{document}